\begin{document}
%
\title{Structural Similarity and Distance in Learning}

\author{\IEEEauthorblockN{Joseph Wang}
\IEEEauthorblockA{Dept. of Electrical and\\Computer Engineering\\
Boston University\\
Boston, MA 02215\\
Email: joewang@bu.edu}
\and
\IEEEauthorblockN{Venkatesh Saligrama}
\IEEEauthorblockA{Dept. of Electrical and\\Computer Engineering\\
Boston University\\
Boston, MA 02215\\
Email: srv@bu.edu}
\and
\IEEEauthorblockN{ David A. Casta\~{n}\'{o}n}
\IEEEauthorblockA{Dept. of Electrical and\\Computer Engineering\\
Boston University\\
Boston, MA 02215\\
Email: dac@bu.edu}
}


%


\newcommand{\fix}{\marginpar{FIX}}
\newcommand{\new}{\marginpar{NEW}}

\newcommand{\argmax}{\operatornamewithlimits{argmax}}
\newcommand{\argmin}{\operatornamewithlimits{argmin}}
\newcommand{\indicator}[1]{\mathbbm{1}_{\left[ {#1} \right] }}

\newcommand{\comments}[1]{}
\newtheorem{thm}{Theorem}
\newtheorem{prop}[thm]{Proposition}
\newtheorem{lem}[thm]{Lemma}
\newtheorem{cor}[thm]{Corollary}
\newtheorem{definition}[thm]{Definition}
\newtheorem{example}[thm]{Example}

\maketitle

\begin{abstract}
We propose a novel method of introducing structure into existing machine learning techniques by developing structure-based similarity and distance measures. To learn structural information, low-dimensional structure of the data is captured by solving a non-linear, low-rank representation problem. We show that this low-rank representation can be kernelized, has a closed-form solution, allows for separation of independent manifolds, and is robust to noise. From this representation, similarity between observations based on non-linear structure is computed and can be incorporated into existing feature transformations, dimensionality reduction techniques, and machine learning methods. Experimental results on both synthetic and real data sets show performance improvements for clustering, and anomaly detection through the use of structural similarity.
\end{abstract}


%
\IEEEpeerreviewmaketitle

\section{Introduction}




The notion of distance, or more generally, similarity between observations, is at the root of most learning algorithms such as manifold learning, unsupervised clustering, semi-supervised learning, and anomaly detection. Most methods, at a basic level, are based on some function of Euclidean distance, such as the radial basis functions prevalent in supervised classification. Graph-based learning methods employ Euclidean distances to describe local neighborhoods for observations. K-nearest neighbors and $\epsilon$-neighborhoods based on Euclidean distances are used in manifold learning (Isomap \cite{ISOMAP} and LLE \cite{local_linear_embedding}), spectral clustering \cite{Ng_spectral_clustering}, anomaly detection algorithms \cite{Manqi_AD,hero_AD} and in label propagation algorithms for semi-supervised learning \cite{Label_Propagation}.

\begin{figure}[ht]
\centering
\begin{center}
\centerline{\includegraphics[width=3in]{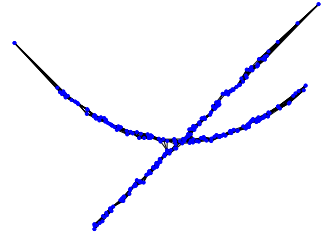}}
\vspace{-0.15in}
\caption{K-nearest neighbor graph constructed on densely-sampled 2-D simulated data set. With densely sampled data points, the graph is representative of the underlying structure.}
\label{intro_dense_fig}\end{center}
\end{figure}

In many cases, notably sparsely sampled sets of data, Euclidean neighborhoods are not sufficient to represent underlying structure. Consider data drawn from two independent structures. Ideally, a graph would capture the structure of the data with minimal connection between observations on separate structures. For densely sampled data on the manifolds, as shown in Fig. \ref{intro_dense_fig}, local Euclidean neighborhoods tend to lie on the same independent manifold, and therefore the K-nearest neighbor graph using Euclidean distance captures the structure of the data. However, when the data is sparsely sampled, neighboring points in the Euclidean sense fail to lie on the same structure, as shown in Fig. \ref{intro_sparse_fig}. We propose a new notion of similarity that accounts for global structure as well as local Euclidean neighborhoods. By using this new notion of similarity, we can define neighborhoods dependent on both Euclidean distance as well as structural similarity.

\begin{figure}[ht]
\centering
\begin{center}
\centerline{\includegraphics[width=3in]{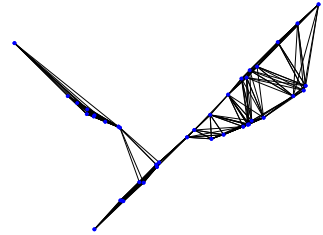}}
\vspace{-0.15in}
\caption{K-nearest neighbor graph constructed on sparsely-sampled 2-D simulated data set. The graph is not representative of the underlying structure of the data, as Euclidean neighborhoods include points lying on both independent structures.}
\label{intro_sparse_fig}\end{center}
\end{figure}

In many situations observations are well described by a single or a union of multiple low-dimensional manifolds. Most methods deal with this scenario by patching together local neighborhoods or local linear subspaces. These local neighborhoods are generally based on local Euclidean balls around each data sample. In cases when observations are sparsely sampled, are noisy, or lie on places where manifold has high curvature, these local approximations could be inaccurate and may not characterize the underlying manifold. Sparse sampling commonly arises in high-dimensions and measurement noise is common to many signal processing applications.

Unfortunately, Euclidean distance does not capture low-dimensional structure of observations unless the manifold is highly sampled. We propose a new approach to more accurately describe local neighborhoods by explicitly incorporating low-dimensional manifold structure of data. We present a complementary method of incorporating similarity into existing machine learning techniques. Our approach can be used in conjunction with dimensionality reduction, feature transformation, and kernel selection techniques to incorporate structure.

In order to capture the low-dimensional structure of the data, we first solve a regularized low-rank representation problem on the observed data. We derive a computationally-efficient closed-form solution solution which allows for handling large sets of observations. The resulting solution produces a low rank matrix, $Z$. Each column of the matrix, $Z$, is associated with a data sample and so the columns of the matrix represent a transformation of data into a new coordinate space. We show that these techniques can be extended to non-linear settings by demonstrating that the low-rank representation problem can be kernelized. We then present algorithms for estimating low rank representation for test samples that conform with the representation for training data.

The problem of determining an approximate representation for data using low-rank subspaces has recently drawn significant interest in the field of matrix completion problems \cite{candes_recht,matrix_completion_from_a_few_entries}. Methods for low-rank representations (LRR) of data drawn from multiple sources belonging to union of subspaces have also been developed \cite{LRR_rep,KLRR_older_tech_report,KLRR_tech_report}. Low-rank representations seek to segment data vectors such that each segmented collection belongs to a low-dimensional linear subspace. Low-rank representation of data is related to traditional simultaneous sparse representation techniques \cite{tropp_eurasip} with important differences. The objective in simultaneous sparsity is to decompose data vectors so that they have a common basis in a dictionary. In both the rank-minimization and simultaneous sparsity problems, the goal is representation of data subject to a structural constraint. In comparison , we are not interested in exact representation of observations, but instead in embedding points in a linear plane, with the notion of low-rank structures as a means of defining neighborhoods. In particular, our resulting minimization resembles the problem posed by Liu et al. \cite{LRR_rep}. However, the problem posed in this paper can be solved in a computationally efficient manner, extended to nonlinear manifolds, and analyzed in the presence of noise.

We theoretically show that the resulting low rank matrix $Z$ has a block diagonal structure, with each block having low rank. In the linear setting, this implies that the data space is automatically decomposed into multiple linear/affine patches. Decomposition into non-linear patches follows from kernelized extensions. This new representation is purely geometric, applies to single or multiple manifolds, and does not require specifying a metric on the manifold. It is adaptive in that it does not require pre-specification of the number of data points for each patch. It is global in that the matrix $Z$ is obtained by solving the low-rank representation on the entire data set. This block diagonal structure is essentially preserved in noisy situations or when the underlying manifold can only be approximated by linear or kernel representations.

This new representation $Z$ can be used in several ways. It can be used as samples from a new low-dimensional feature/observation space. Nearest neighbors for data samples or similarity between different data samples can be derived in this new representation. Alternatively, structural similarity can be computed between observations. Consequently, this new representation can be viewed as a pre-processing step not only for most machine learning algorithms, but also as a pre-processing step for other pre-processing steps, such as graph construction, that are commonly undertaken for machine learning.  We then present a number of simulations on a wide variety of data sets for a wide range of problems including clustering, semi-supervised learning and anomaly detection. The simulations show remarkable improvement in performance over conventional methods.


\section{Structured Similarity and Neighborhoods}
\label{s.klrr}
\subsection{Low-Rank Data Transformation}
\label{s.transform}
Consider a set of observations, $X = [x_1, \, x_2,\, \ldots x_n]$, where $x_i \in \mathbb{R}^{d \times 1}$, approximately embedded on multiple independent, low-dimensional manifolds. Our goal is to discover these manifolds using by using techniques to learn low-rank representations of the data. In the case where the observations are embedded on linear subspaces, the low-rank representation (LRR) problem can be formulated as:
\begin{align}
\label{noise_free_rank_constraint}
Z=\min_Z \|X-XZ\|_{F}^{2} \\
\text{s.t.  } \text{Rank}(Z)=R  \notag
\end{align}
where $\|\cdot\|_{F}$ is the Frobenius norm, and the solution, $Z$, is the minimum squared-error linear embedding on a $R$-dimensional subspace. Relaxing the constraint in \eqref{noise_free_rank_constraint}, the minimization can be equivalently written:
\begin{align}
\label{noise_free_rank_min}
\min_Z \|X-XZ\|_{F}^{2} +\lambda\cdot\text{Rank}(Z)
\end{align}
Optimizing the rank of a matrix is a non-convex, combinatorial optimization. The convex relaxation of rank, the nuclear norm, is substituted, resulting in the convex optimization:
\begin{align}
\label{LRR}
\min_Z \|X-XZ\|_{F}^{2} +\lambda\|Z\|_{*}
\end{align}
This related problem was originally posed as a subspace segmentation method by Liu et al. \cite{LRR_rep}, who minimize the $\ell_2/\ell_1$ embedding error. A Kernelized Low-Rank Representation (KLRR) formulation of the problem naturally follows for the case where data is embedded on nonlinear subspaces:
\begin{align}
\label{KLRR}
\min_Z \frac{1}{2}\|\phi(X)-\phi(X)Z\|_{F}^{2} +\lambda\|Z\|_{*}
\end{align}
where $\phi(\cdot)$ is an expanded basis function with an associated kernel function, $K(i,j)=\phi(i)^{T}\phi(j)$. The form and parameters of the function $\phi(\cdot)$ are an assumption on the structure of the observations. Ideally, $\phi(\cdot)$ is chosen such that all observations are well approximated in the expanded basis space with a linear low-dimensional approximation while still maintaining the relationship between observations. As in all kernel methods, the accuracy of the approximation of the manifold is dependent on the ability of the kernel to fit the data. We refer only to the kernelized problem, as the linear problem is a specific case, where $\phi(X)=X$ and $K(X,X)=X^{T}X$.

\begin{thm}
\label{klrr_cf_solution_thm}
For $\lambda\geq 0$, the KLRR problem \eqref{KLRR} is minimized by the representation:
\begin{align}
\label{lrr_cf_sol_eqn}
Z^{*}=UD_{\lambda}U^{T}
\end{align}
where the singular vectors, $U$, are found by the singular value decomposition of the kernel matrix $K(X,X)=\phi(X)^{T}\phi(X)=UDU^{T}$, and $D_{\lambda}$ is the diagonal matrix defined
\begin{align}
\label{soft_thresholding_operator}
D_{\lambda}(i,i)=
\begin{cases}
\mathbf{1-\frac{\lambda}{\sigma_{i}}} & \text{if $\sigma_{i}>\lambda$}
\\
\mathbf{0} & \text{otherwise}
\end{cases}
\end{align}
and $\sigma_{i}$ is the $i$th singular value of the kernel matrix.
\end{thm}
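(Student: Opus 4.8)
The plan is to use the fact that the data enters the KLRR objective only through the Gram matrix $K = \phi(X)^{T}\phi(X)$, and then to diagonalize. First I would rewrite the fit term as $\|\phi(X)-\phi(X)Z\|_{F}^{2} = \mathrm{tr}\!\big((I-Z)^{T}K(I-Z)\big)$, so that \eqref{KLRR} becomes the minimization of $f(Z) = \tfrac12\mathrm{tr}\!\big((I-Z)^{T}K(I-Z)\big) + \lambda\|Z\|_{*}$, a convex function of $Z$ alone. Using the eigendecomposition $K = UDU^{T}$ (with $U$ orthogonal and $D = \mathrm{diag}(\sigma_{1},\dots,\sigma_{n})\succeq 0$, which is also an SVD of $K$), I would substitute $Z = UWU^{T}$; since $U$ is orthogonal this is a bijection on $n\times n$ matrices, and both the Frobenius and nuclear norms are unitarily invariant, so the problem becomes $\min_{W} g(W)$ with $g(W) = \tfrac12\mathrm{tr}\!\big((I-W)^{T}D(I-W)\big) + \lambda\|W\|_{*}$. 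Expanding the trace gives $g(W) = \tfrac12\sum_{i} \sigma_{i}\big((1-W_{ii})^{2} + \sum_{k\ne i}W_{ik}^{2}\big) + \lambda\|W\|_{*}$.

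Next I would show the minimizer of $g$ may be taken diagonal. For an arbitrary $W$, let $\bar W$ be its diagonal part. The quadratic term does not increase when $W$ is replaced by $\bar W$, since the discarded terms $\sigma_{i} W_{ik}^{2}$ ($k\ne i$) are nonnegative. For the nuclear-norm term one uses the averaging identity $\bar W = 2^{-n}\sum_{s\in\{\pm1\}^{n}} D_{s}WD_{s}$ with $D_{s} = \mathrm{diag}(s)$ orthogonal; convexity of $\|\cdot\|_{*}$ and unitary invariance then give $\|\bar W\|_{*} \le 2^{-n}\sum_{s}\|D_{s}WD_{s}\|_{*} = \|W\|_{*}$. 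Hence $g(\bar W)\le g(W)$, so it suffices to minimize over diagonal $W=\mathrm{diag}(w_{1},\dots,w_{n})$, where $g$ decouples into the scalar problems $\min_{w}\big[\tfrac12\sigma_{i}(1-w)^{2} + \lambda|w|\big]$. Each of these is the standard soft-thresholding (proximal) problem, whose minimizer is $w = \max(1-\lambda/\sigma_{i},\,0)$ for $\sigma_{i}>0$ (and $w=0$ when $\sigma_{i}=0$), which is exactly the entry $D_{\lambda}(i,i)$ in \eqref{soft_thresholding_operator}. Undoing the change of variables gives $Z^{*} = UD_{\lambda}U^{T}$. The step I expect to require the most care is this reduction to diagonal $W$: the nuclear norm is not separable over entries, so the naive ``kill the off-diagonal'' argument needs the sign-flip averaging bound above (and one should note that $Z^{*}$ is a minimizer for every $\lambda\ge 0$, unique when the $\sigma_{i}$ are distinct and positive).

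As a self-contained alternative that sidesteps the diagonal reduction, I could instead verify directly that $Z^{*}=UD_{\lambda}U^{T}$ satisfies the first-order optimality condition for the convex problem, namely $K(I-Z^{*})\in\lambda\,\partial\|Z^{*}\|_{*}$ (the gradient of the fit term at $Z^{*}$ is $-K(I-Z^{*})$). Splitting $U=[\,U_{1}\mid U_{2}\,]$ into the eigenvector blocks with $\sigma_{i}>\lambda$ and $\sigma_{i}\le\lambda$, a short computation gives $K(I-Z^{*}) = UD(I-D_{\lambda})U^{T} = \lambda U_{1}U_{1}^{T} + U_{2}\,\mathrm{diag}(\sigma_{i}:\sigma_{i}\le\lambda)\,U_{2}^{T}$. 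Here $U_{1}U_{1}^{T}$ is precisely the ``sign'' term $PQ^{T}$ in the subdifferential of $\|\cdot\|_{*}$ at $Z^{*}$ (whose compact SVD is $U_{1} D_{\lambda}^{(1)}U_{1}^{T}$ with $D_{\lambda}^{(1)}\succ 0$), while $M := \lambda^{-1}U_{2}\,\mathrm{diag}(\sigma_{i}:\sigma_{i}\le\lambda)\,U_{2}^{T}$ satisfies $U_{1}^{T}M=0$, $MU_{1}=0$ and $\|M\|_{2} = \lambda^{-1}\max\{\sigma_{i}:\sigma_{i}\le\lambda\}\le 1$, so $\lambda U_{1}U_{1}^{T} + \lambda M \in \lambda\,\partial\|Z^{*}\|_{*}$. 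Convexity then certifies that $Z^{*}$ is a global minimizer; the degenerate case $\lambda=0$ reduces to orthogonal projection onto the range of $\phi(X)^{T}$ and is checked directly.
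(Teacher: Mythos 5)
Your main argument follows the same route as the paper's proof: a unitary change of variables using the eigenvectors of the kernel matrix, reduction to a diagonal problem, and entrywise soft-thresholding. Two differences are worth noting. First, you phrase the fit term as $\mathrm{tr}\bigl((I-Z)^{T}K(I-Z)\bigr)$ and never need the SVD of $\phi(X)$ itself, whereas the paper writes $\phi(X)=VDU^{T}$ and pre-multiplies by $V^{T}$; your version is cleaner in the kernel setting where $\phi$ may map into a very high- or infinite-dimensional space, and it also avoids the paper's notational conflation of the singular values of $\phi(X)$ with those of $K$. Second, and more substantively, the paper justifies restricting to diagonal $\hat Z$ only by asserting that off-diagonal entries ``increase the value of the cost function in both the Frobenius norm and nuclear norm terms''; for the nuclear norm this is not self-evident, since it is not an entrywise-monotone function, and your sign-flip averaging (pinching) argument $\bar W = 2^{-n}\sum_{s}D_{s}WD_{s}$ combined with convexity and unitary invariance is exactly the missing justification. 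Your alternative subgradient certificate, checking $K(I-Z^{*})\in\lambda\,\partial\|Z^{*}\|_{*}$ directly, is a genuinely different and self-contained route that the paper does not take; it buys you a global-optimality proof without any diagonal-reduction lemma, at the cost of having to recall the characterization of the nuclear-norm subdifferential. Both of your arguments are correct.
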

\begin{proof}
The nuclear norm is unitarily invariant, so substituting $\hat{Z}=U^{T}ZU$, where $U$ is the matrix of singular vectors of the kernel matrix produces an equivalent minimization
\begin{align}
U^{T}ZU=\argmin_{\hat{Z}} \frac{1}{2}\|VDU^{T}-VD\hat{Z}U^{T}\|_{F}^2 +\lambda\|\hat{Z}\|_{*}
\end{align}
where $VDU^{T}$ is the singular value decomposition of the matrix $\phi(X)$. The Frobenius norm is also unitarily invariant, and therefore pre-multiplying and post-multiplying the argument of the Frobenius norm by the matrices $V^{T}$ and $U$ respectively produces the following minimization:
\begin{align}
U^{T}ZU=\argmin_{\hat{Z}} \frac{1}{2}\|D-D\hat{Z}\|_{F}^2 +\lambda\|\hat{Z}\|_{*}
\end{align}
In the above minimization, $D$ is a diagonal matrix, resulting in $\hat{Z}$ also being a diagonal matrix, as any off diagonal elements will increase the value of the cost function in both the Frobenius norm and nuclear norm terms. With a diagonal structure, the Frobenius norm is equivalent to the $\ell_{2}$ norm of its weighted diagonal terms, and the nuclear norm of $\tilde{Z}$ is equivalent to the $\ell_{1}$ norm on its diagonal.
\begin{align}
U^{T}ZU=D_{\lambda}=\argmin_{\hat{Z}} \frac{1}{2}\|diag\left(D(I-\hat{Z})\right)\|_{2}^2 \notag\\
+\lambda\|diag(\hat{Z})\|_{1}
\end{align}
The solution to this minimization is given by the soft-thresholding operator, which gives the closed-form solution in the theorem statement.
\end{proof}

The closed-form solution to the KLRR problem has previously been shown in \cite{KLRR_tech_report}, with the closed=form solution to the linear low-rank representation problem shown in \cite{DBLP:conf/cvpr/FavaroVR11}.

From Theorem \ref{klrr_cf_solution_thm}, low-rank representations of high-dimensional expanded basis space observations can be computed efficiently using only kernel functions. Additionally, this solution has near block-diagonal structure for sets of observations existing on independent subspaces in the expanded basis space.

\begin{thm}
\label{block_diagonal_structure}
Given observations lying on independent subspaces in the expanded basis space, the low-rank representation is near block-diagonal, with elements off-block-diagonal bounded:
\begin{align}
z_{ij}\leq \lambda\sqrt{\sum_{i: \sigma_i>\lambda }\left(\frac{1}{\sigma_i}\right)^{2}}
\end{align}
where $x_i$ and $x_j$ lie on independent manifolds.
\end{thm}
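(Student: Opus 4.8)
The plan is to extract $z_{ij}$ from the closed form $Z^{*}=UD_{\lambda}U^{T}$ of Theorem~\ref{klrr_cf_solution_thm} and split $D_{\lambda}$ into a rank-truncation piece and a shrinkage piece. Write $D_{\lambda}=P_{\lambda}-\lambda S_{\lambda}$, where $P_{\lambda}=\mathrm{diag}\big(\indicator{\sigma_{k}>\lambda}\big)$ and $S_{\lambda}=\mathrm{diag}\big(\sigma_{k}^{-1}\indicator{\sigma_{k}>\lambda}\big)$. Then $Z^{*}=UP_{\lambda}U^{T}-\lambda\,US_{\lambda}U^{T}$, so $z_{ij}=[UP_{\lambda}U^{T}]_{ij}-\lambda\,[US_{\lambda}U^{T}]_{ij}$. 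The matrix $UP_{\lambda}U^{T}$ is the orthogonal projection of $\mathbb{R}^{n}$ onto the span of those eigenvectors of $K(X,X)$ whose eigenvalue exceeds $\lambda$; in the regime where $\lambda$ stays below the smallest nonzero eigenvalue of $K(X,X)$, this is exactly the projection onto the row space of $\phi(X)$.

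The first step is then to show this projection is block diagonal, so that $[UP_{\lambda}U^{T}]_{ij}=0$ whenever $x_{i}$ and $x_{j}$ lie on different manifolds. Grouping the columns of $\phi(X)$ by manifold, $\phi(X)=[\,\phi(X_{1})\mid\cdots\mid\phi(X_{m})\,]$, independence of the subspaces gives the rank identity $\mathrm{rank}(\phi(X))=\sum_{\ell}\mathrm{rank}(\phi(X_{\ell}))$. The row space of $\phi(X)$ always embeds into the direct product of the row spaces of the blocks $\phi(X_{\ell})$ (restrict a row to the coordinates of block $\ell$); by the rank identity these subspaces have equal dimension, hence coincide, so the projection onto the row space of $\phi(X)$ is the direct sum of the per-block projections and is block diagonal. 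This is the classical shape-interaction matrix fact (Costeira--Kanade) that also underlies the block-diagonal LRR result of Liu et al.~\cite{LRR_rep}.

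It remains to bound the shrinkage term $\big|[US_{\lambda}U^{T}]_{ij}\big|=\big|\sum_{k:\sigma_{k}>\lambda}\sigma_{k}^{-1}u_{ik}u_{jk}\big|$, with $u_{ik}$ the $(i,k)$ entry of the orthogonal matrix $U$. Cauchy--Schwarz on the pairing of $(\sigma_{k}^{-1}u_{ik})_{k}$ with $(u_{jk})_{k}$ bounds this by $\big(\sum_{k:\sigma_{k}>\lambda}\sigma_{k}^{-2}u_{ik}^{2}\big)^{1/2}\big(\sum_{k:\sigma_{k}>\lambda}u_{jk}^{2}\big)^{1/2}$; since $u_{ik}^{2}\le 1$ for each entry and $\sum_{k}u_{jk}^{2}=1$ for each row of $U$, this is at most $\big(\sum_{k:\sigma_{k}>\lambda}(1/\sigma_{k})^{2}\big)^{1/2}$. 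Combining with the vanishing of the projection term, $z_{ij}=-\lambda\,[US_{\lambda}U^{T}]_{ij}\le\lambda\sqrt{\sum_{k:\sigma_{k}>\lambda}(1/\sigma_{k})^{2}}$, which is the claimed bound.

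The Cauchy--Schwarz step is routine bookkeeping; the substance is the block-diagonality of the projection term, which is where independence of the manifolds enters and which reduces to the rank-additivity identity above. The delicate point to handle with care is the rank truncation: the identification of $UP_{\lambda}U^{T}$ with the block-diagonal row-space projection is valid only when $\lambda$ does not cut below the true rank of $\phi(X)$, i.e. $\lambda$ stays below the smallest nonzero eigenvalue of $K(X,X)$; I would either state the theorem under that condition or note explicitly that it is the regime in which the bound is meaningful, since for larger $\lambda$ the top eigenspace need not respect the block structure.
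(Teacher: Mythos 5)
Your proof is correct and follows essentially the same route as the paper's: both split $Z^{*}$ into a projection onto the row space of $\phi(X)$ (shown block-diagonal using independence of the subspaces) minus a $\lambda$-scaled term built from inverse singular values, and both arrive at the identical bound --- you via entrywise Cauchy--Schwarz, the paper by bounding the max entry with the Frobenius norm and invoking unitary invariance. Your dimension-counting argument for block-diagonality of the projection (rank additivity forces the row space of $\phi(X)$ to equal the direct sum of the per-block row spaces) replaces the paper's explicit construction of an orthonormal factor $B'$ satisfying $B'B'^{T}=I$, but both are proofs of the same shape-interaction-matrix fact. Your closing caveat --- that identifying $UP_{\lambda}U^{T}$ with the row-space projection requires $\lambda$ to stay below the smallest nonzero eigenvalue of $K(X,X)$ --- is a real condition that the paper's proof also depends on (it writes $Z^{*}=V_X(I-\lambda\Sigma_X^{-1})V_X^{T}$ with no thresholding of the spectrum) but never states; making it explicit is an improvement over the published argument.
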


\begin{proof}
Consider the case of $\phi(X)=\begin{bmatrix}\phi(X_{1}) & \phi(X_{2})\end{bmatrix}$, where $\phi(X_{1})\in\mathbb{R}^{D \times n_1}$ and $\phi(X_{2})\in\mathbb{R}^{D \times n_2}$ span independent subspaces and have rank $r_1$ and $r_2$, $r_1+r_2<D$ and $\text{Rank}(\phi(X))=r_1+r_2$.

$\phi(X)$ is composed of observations lying on independent subspaces and therefore any point in $\phi(X_1)$ or $\phi(X_2)$ can be expressed as a linear combination of other points in the sets $\phi(X_1)$ or $\phi(X_2)$, respectively. Given the compact SVD, $\phi(X)=U_X\Sigma_XV_X^{T}$, where only singular vectors associated with non-zero singular values are included in the basis $U_X$ and $V_X$. the linear transformation:
\begin{align}
W=\Sigma_X^{-1}U_X^{T}\phi(X)=V_X^{T}
\end{align}
preserves the dependence structure of $\phi(X)$, resulting in the decomposition:
\begin{align}
W=\begin{bmatrix}B_1 & B_2 \end{bmatrix}\begin{bmatrix}\alpha_1 & 0\\ 0 & \alpha_2 \end{bmatrix}
\end{align}
where $B_1 \in \mathbb{R}^{r_1+r_2 \times r_1}$ and $B_2 \in \mathbb{R}^{r_1+r_2 \times r_2}$ are basis matrices and $\alpha_1 \in \mathbb{R}^{r_1 \times n_1}$ and $\alpha_2 \in \mathbb{R}^{r_2 \times n_2}$ are the representation associated with each independent subspace. Substituting the singular value decompositions $\alpha_1=U_1\Sigma_1V_1^T$ and $\alpha_2=U_2\Sigma_2V_2^T$, $W$ can be expressed:
\begin{align}
\label{sv_bd_decomp}
W=B'\begin{bmatrix}V_1^T & 0\\ 0 & V_2^T\end{bmatrix}
\end{align}
where $B'=\begin{bmatrix}B_1U_1\Sigma_1 & B_2U_2\Sigma_2 \end{bmatrix}$. The matrix $W$ is a set of singular vectors, so the following property must hold:
\begin{align}
WW^{T}=B'\begin{bmatrix}V_1^T & 0\\ 0 & V_2^T\end{bmatrix}\begin{bmatrix}V_1 & 0\\ 0 & V_2\end{bmatrix}B'^{T}=B'B'^{T}=I
\end{align}
Therefore, $B'$ is an orthonormal matrix.

The solution to the KLRR problem can be expressed
\begin{align}
\label{noise_free_approx_sol}
Z^{*}=V_X(I-\lambda \Sigma_X^{-1})V_X^{T}=V_XV_X^{T}-V_X\lambda \Sigma_X^{-1}V_X^{T}\notag\\
=\begin{bmatrix}V_1 & 0\\ 0 & V_2\end{bmatrix}B'^{T}B'\begin{bmatrix}V_1^T & 0\\ 0 & V_2^T\end{bmatrix}-V_X\lambda \Sigma_X^{-1}V_X^{T}
\end{align}
The term $V_XV_X^T$ is block diagonal, so the inner product between representations lying on separate independent subspaces can be bounded by bounding the off-block-diagonal elements:
\begin{align}
\|V_X\lambda \Sigma_X^{-1}V_X^{T}\|_{\infty}\leq\|V_X\lambda \Sigma_X^{-1}V_X^{T}\|_{F}=\lambda \|\Sigma_X^{-1}\|_{F}
\end{align}
\end{proof}
From this structure, we construct measures of similarity that result in large distance between observations on separate manifolds independent of Euclidean distance, resulting in separation of independent manifolds.

For reliable performance, the near block-diagonal structure of the KLRR matrix should remain in the presence of small perturbations. To demonstrate the robustness to noise, we bound the Frobenius norm of the off-diagonal elements when the kernel matrix is perturbed, guaranteeing near block-diagonal structure in the presence of noise.

\begin{thm}
\label{perturbed_block_diagonal}
Consider a perturbed kernel matrix
\begin{align}
\widetilde{K}(X,X)=K(X,X)+E
\end{align}
where $K(X,X)$ has a rank $r$ composed of two independent subspaces. The perturbed KLRR, $\widetilde{Z}$, is found using the matrix $\widetilde{K}(X,X)$. Define the matrix $N$ to be the off diagonal blocks of the matrix $\widetilde{Z}$, such that for all $z_{ij} \in N$, the observations $x_i$ and $x_j$ lie on independent manifolds. Then the matrix $N$ is bounded:
\begin{align}
\|N\|_{F} \leq \frac{4\sqrt{2}\|E\|_{F}}{\sigma_r-\sigma_e}
\end{align}
where $\sigma_e$ is the largest singular value of the matrix $E$.

\end{thm}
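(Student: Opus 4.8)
The plan is to treat $\widetilde{Z}$ as a perturbation of the unperturbed solution $Z^{*}$ from Theorem~\ref{klrr_cf_solution_thm} and to control its off-block-diagonal part by matrix perturbation theory (Weyl's inequality together with the Davis--Kahan $\sin\Theta$ theorem, or Wedin's theorem if $E$ is not symmetric). First I would record the two closed forms explicitly. Writing $P=V_XV_X^{T}$ and $Q=V_X\Sigma_X^{-1}V_X^{T}$ for the rank-$r$ dominant projector and the ``inverse'' part associated with $K$, Theorem~\ref{klrr_cf_solution_thm} gives $Z^{*}=P-\lambda Q$, and by the argument in the proof of Theorem~\ref{block_diagonal_structure} the projector $P$ is exactly block diagonal. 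The analogous decomposition $\widetilde{Z}=\widetilde{P}-\lambda\widetilde{Q}$ for the perturbed problem first requires showing that the soft-threshold in \eqref{soft_thresholding_operator} still retains exactly the $r$ dominant singular directions of $\widetilde{K}$; this follows from Weyl's inequality, $|\widetilde{\sigma}_i-\sigma_i|\leq\sigma_e$, which keeps the $r$ dominant perturbed singular values above $\sigma_r-\sigma_e$ and the remaining ones below $\sigma_e$, provided $\lambda$ lies in the admissible range $\sigma_e<\lambda<\sigma_r-\sigma_e$ (this also forces $\sigma_r-\sigma_e>0$, as the statement implicitly assumes).

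With these forms in hand, the off-block-diagonal matrix is $N=\mathrm{offdiag}(\widetilde{Z})=\mathrm{offdiag}(\widetilde{P})-\lambda\,\mathrm{offdiag}(\widetilde{Q})$. Since $P$ is block diagonal, $\|\mathrm{offdiag}(\widetilde{P})\|_{F}=\|\mathrm{offdiag}(\widetilde{P}-P)\|_{F}\leq\|\widetilde{P}-P\|_{F}$, and Davis--Kahan bounds this by $\sqrt{2}\,\|\sin\Theta\|_{F}\leq c\,\|E\|_{F}/(\sigma_r-\sigma_e)$, the relevant eigengap being $\sigma_r-\sigma_e$ by the Weyl estimate above (the retained eigenvalues of $K$ sit in $[\sigma_r,\sigma_1]$, the complementary eigenvalues of $\widetilde{K}$ in $[0,\sigma_e]$). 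For the shrinkage term I would split $\lambda\|\mathrm{offdiag}(\widetilde{Q})\|_{F}\leq\lambda\|\mathrm{offdiag}(Q)\|_{F}+\lambda\|\widetilde{Q}-Q\|_{F}$; the first summand is the Theorem~\ref{block_diagonal_structure} quantity, bounded by $\lambda\|\Sigma_X^{-1}\|_{F}$, and the second is a pseudoinverse-type perturbation which I would estimate with a resolvent-type identity relating $\widetilde{Q}-Q$ to $\widetilde{Q}(K-\widetilde{K})Q$ plus contributions from the subspace rotation $\widetilde{P}-P$, giving a bound of order $\lambda\|E\|_{F}/(\sigma_r(\sigma_r-\sigma_e))\leq\|E\|_{F}/(\sigma_r-\sigma_e)$ since $\lambda<\sigma_r$. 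Collecting the projector term and the two shrinkage terms and absorbing the numerical constants (and, in the stated small-noise regime, the residual $\lambda\|\Sigma_X^{-1}\|_{F}$ piece) yields $\|N\|_{F}\leq 4\sqrt{2}\,\|E\|_{F}/(\sigma_r-\sigma_e)$.

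The routine part of this is the Davis--Kahan estimate for the dominant projector. I expect two real obstacles. The first is the bookkeeping around the threshold: making precise the range of $\lambda$ relative to $\sigma_r$ and $\sigma_e$ for which the perturbed thresholded subspace is genuinely the perturbation of the true $r$-dimensional dominant subspace, so that the clean decomposition $\widetilde{Z}=\widetilde{P}-\lambda\widetilde{Q}$ is valid and no spurious near-noise directions leak into $\widetilde{Z}$. The second, and the place where the $\sigma_r$ in the denominator and the size of the constant actually originate, is the estimate of $\|\widetilde{Q}-Q\|_{F}$: because $Q$ inverts the singular values it is most sensitive to the smallest retained singular value $\sigma_r$, and one must combine the rotation of the singular subspace and the movement of the singular values carefully, without double counting, to keep the denominator at $\sigma_r-\sigma_e$ rather than $(\sigma_r-\sigma_e)^2$.
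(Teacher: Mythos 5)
The paper omits its own proof of this theorem, stating only that the bound ``is derived by bounding the canonical angle between perturbed eigenvectors, using Thm V.4.1 of Stewart and Sun''; your plan --- Weyl's inequality to locate the perturbed spectrum, then a Davis--Kahan/Wedin $\sin\Theta$ bound on the dominant rank-$r$ projector, converted via $\|\widetilde{P}-P\|_F=\sqrt{2}\,\|\sin\Theta\|_F$ --- is exactly that strategy, and the projector half of your argument is sound and is where the $\sigma_r-\sigma_e$ denominator legitimately comes from.

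The genuine gap is in your handling of the shrinkage term. You bound $\lambda\|\mathrm{offdiag}(\widetilde{Q})\|_F\le\lambda\|\mathrm{offdiag}(Q)\|_F+\lambda\|\widetilde{Q}-Q\|_F$ and then propose to ``absorb'' the first summand, $\lambda\|\Sigma_X^{-1}\|_F$, into the numerical constant $4\sqrt{2}$. That cannot work: $\lambda\|\Sigma_X^{-1}\|_F$ is independent of $E$ and stays bounded away from zero as $\|E\|_F\to 0$, while the right-hand side of the claimed inequality vanishes in that limit, so no choice of constant makes the absorption legitimate. You need either an extra hypothesis coupling $\lambda$ to the noise level (e.g.\ $\lambda\lesssim\|E\|_F$), or to reinterpret $N$ as the \emph{change} in the off-diagonal blocks relative to the unperturbed $Z^{*}$, whose own off-diagonal blocks Theorem \ref{block_diagonal_structure} already bounds by $\lambda\|\Sigma_X^{-1}\|_F$. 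This looseness is arguably inherited from the theorem statement itself, but as written your argument does not close, and you should say explicitly which repair you are taking. The remaining piece, $\lambda\|\widetilde{Q}-Q\|_F$, is right in spirit, but the resolvent estimate needs to be written out: keeping the denominator at $\sigma_r-\sigma_e$ rather than $(\sigma_r-\sigma_e)^2$ --- the delicate point you correctly flag --- requires combining $\lambda<\sigma_r$ with the Weyl bound $\widetilde{\sigma}_r\ge\sigma_r-\sigma_e$ inside the estimate, not merely citing both facts.
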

The proof of this theorem is omitted due to length limitations. The bound is derived by bounding the canonical angle between perturbed eigenvectors, using Thm V.4.1 of Stewart and Sun \cite{pert_theory_stewart_sun}.

Theorem \ref{perturbed_block_diagonal} illustrates the robustness to noise of the low-rank representation. This is an adversarial bound, with the no restrictions placed on the structure of the perturbations. Given small perturbations such that $\sigma_e \in o(\sigma_r)$, the norm of the off diagonal elements can be bounded linearly with the norm of the perturbation. Therefore, small perturbations in the observations have small effects on the structure of the KLRR representation.

\subsection{Transformation for Test Observations}
\label{s.test}
Consider a new observation, $x_{test} \in \mathbb{R}^{d \times 1}$. In order to represent the new observation in the low-rank representation space, we extend the KLRR formulation from Equation~\eqref{KLRR}. We project the data onto the expanded basis span of the KLRR representation, $\phi(X)Z$, where $Z$ is the KLRR on the training set $X$. The minimum norm projection can be calculated as a function of kernel functions.
\begin{align}
\label{test_obs_rep}
z_{test}=Z\left(Z^{T}K(X,X)Z\right)^{-1}Z^{T}K(X,x_{test})
\end{align}
The representation, $z_{test}$, can be treated as a new sample from the low-rank feature space. This representation may be a poor representation of the original observation if it does not lie on the manifold. One measure of how well a new observation is characterized by a manifold is by projecting the new observation onto the low-dimensional manifold \eqref{test_obs_rep}, then measuring the residual energy of the observation in the expanded basis space.
\begin{align}
\label{manifold_residual}
r_{test}=\|\phi(x_{test})-\phi(X)z_{test}\|_{2}
\end{align}
This residual can also be calculated using only kernel functions and provides an evaluation of how well the low-rank representation fits a new observation.

%
%
%
%
%

\subsection{Structured Kernel Design for Supervised and Unsupervised Learning}
\label{s.kernel}
From the low-rank representation, we now present methods of constructing kernels. The low-rank transformation of the raw data offers possibilities for designing kernels that incorporate the underlying structure in the data.

In order to exploit this structure we consider some specific PSD kernels, which are basically the dot product, i.e.,
\begin{align}
\label{cos_sim}
w_{ij}=\frac{z_i^{T}z_j}{\|z_i\|\|z_j\|}
\end{align}
where $w_{ij}$ is the similarity between observations $i$ and $j$ and $z_i$ and $z_j$ are the $i$th and $j$th columns of $Z$, respectively. The value $w_{ij}$ is the magnitude of the cosine of the angle between the vectors $z_i$ and $z_j$. Given the near block-diagonal structure of the KLRR matrix, as shown in Theorem \ref{block_diagonal_structure}, observations lying on independent subspaces have a very small similarity.

One issue with this similarity function is that it is undefined if either $z_i$ or $z_j$ is identically zero. Consequently, we can define $w_{ij} = 0$ if either $z_i$ or $z_j$ is zero. With this convention it is possible to show that this similarity satisfies the properties of PSD, i.e.,
\begin{align*}
&\tilde K (z_i,z_j) = z_i^{T}z_j {1\over \|z_i\|}{1\over \|z_j\|}\\
&= \tilde K_1(z_i,z_j) g(z_i)g(z_j) = \tilde K_1(z_i,z_j) \tilde K_2(z_i,z_j)
\end{align*}
$\tilde K_2(z_i,z_j) = g(z_i)g(z_j)$ is a valid PSD kernel (since it represents a rank one structure). Now since both $\tilde K_1$ and $\tilde K_2$ are valid PSD kernels it follows that $\tilde K$ is a valid PSD kernel. The similarity proposed in \eqref{cos_sim} captures the structure of the observations, however, the scaling information is lost. In order to incorporate structural information while preserving spatial relationships in the observation space, we propose the PSD kernel:
\begin{align}
\label{scaled_cos_sim}
s_{ij}=K(x_i,x_j)=\frac{\langle z_i,z_j\rangle}{\|z_i\|\cdot\|z_j\|}e^{\frac{-\|x_i-x_j\|^2}{2\sigma^{2}}}
\end{align}
Two observations only have a large similarity if the observations lie on the same manifold and have a small geometric distance. If $x_i$ and $x_j$ lie on independent manifolds, from the structure of the KLRR matrix, the angle between the observations is small, and therefore the similarity, $s_{ij}$, is also small. Alternatively, if the observations lie on the same low-dimensional manifold, but have a large geometric distance, the exponential term drives the similarity to a small value.

\begin{figure}[ht]
\centering
\begin{center}
\centerline{\includegraphics[width=3in]{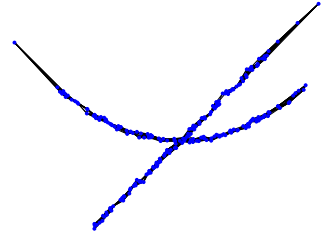}}
\vspace{-0.15in}
\caption{Graph for densely sampled 2-D simulated data set constructed by connecting the K-nearest structurally similar neighbors as defined by \eqref{scaled_cos_sim}.}
\label{dense_graph_example}\end{center}
\end{figure}

\begin{figure}[ht]
\centering
\begin{center}
\centerline{\includegraphics[width=3in]{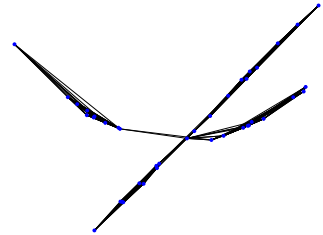}}
\vspace{-0.15in}
\caption{Graph constructed by connecting the K-nearest structurally similar neighbors as defined by \eqref{scaled_cos_sim}, which captures the structure of the data. The Euclidean K-nearest neighbor graph fails to capture the structure of the data, as shown in Fig. \ref{intro_sparse_fig}.}
\label{sparse_graph_example}\end{center}
\end{figure}

Figures \ref{dense_graph_example} and \ref{sparse_graph_example} demonstrate the effect of using structural similarity as opposed to Euclidean distance on the same sets of data as presented in Figs. \ref{intro_dense_fig} and \ref{intro_sparse_fig}. In the case of densely sampled manifolds, using both notions of similarity construct graphs that capture the underlying structure of the data, as shown in Fig. \ref{dense_graph_example}. However, when the data is sparsely sampled, the use of structural similarity allows the graph to accurately characterize the underlying structure of the data, as shown in Fig. \ref{sparse_graph_example}.

From the definition of similarity posed in \eqref{scaled_cos_sim}, a means of defining distance between observations follows:
\begin{align}
\label{vector_metric}
d(x_i,x_j)=\sqrt{s_{ii} + s_{jj} - 2 s_{ij}}
\end{align}
The metric \eqref{vector_metric} defines a new set of distances between observations combining both the structural similarity of the data as well as the Euclidean distance of the data. Two observations have a small distance if and only if the observations lie on the same low-dimensional manifold and have a small distance in the observations space.

%
%

\section{Manifold Anomaly Detection}
\label{s.AD}
The goal of our anomaly detection scheme is to define points not based on distance to nominal points, but instead based on distance to a low-dimensional manifold on which nominal points are embedded. In K-NNG and $\epsilon$-NNG approaches, the underlying manifold is modeled by dense sampling of data points, whereas our approach no longer requires dense sampling of data points, but instead structural assumptions on the data. For a set of nominal observations embedded on a manifold, we propose a method of anomaly detection based on p-value estimation \cite{Manqi_AD}.

Given a set of nominal training observations, $X$, a kernelized low-rank representation, $Z$, is found as described in Sectoin \ref{s.transform}. For a new test observation, $x_t$, a corresponding low-rank representation, $z_t$, is found through the update method described in Section \ref{s.test}. From these low-rank representations, the residual of the test observation is compared to the residuals of the labeled observations:
\begin{align}
\label{residual_p_value}
p_t=\frac{1}{n}\sum_{i=1}^{n} \mathbf{1}_{\bar{w}_t e^{-r_t} > \bar{w}_i e^{-r_i}}
\end{align}
where $r_i$ is the residual of the $i$th labeled observation, calculated as shown in \eqref{manifold_residual}, and $\bar{w}_t$ and $\bar{w}_i$ are the average angles cosine similarities of the representations as defined in \eqref{cos_sim}. The test observation is declared anomalous if $p_t > \alpha$. The proposed anomaly detection characterizes the nominal set by a nonlinear low-dimensional manifold and uses a measure of similarity to the manifold to determine if test observations are anomalous.

We modify our algorithm to simplify analysis. Assuming that $n$ is even, we divide the training set into two sets $S_1$ and $S_2$. We compute the KLRR as defined by \eqref{KLRR} for the set $S_1$ and compute representations for the set $S_2$ and the training sample, $x_t$, as defined by \eqref{test_obs_rep}. The p-value of the new observation is then estimated as follows:
\begin{align}
\label{analysis_p_value}
\hat{p}_t=\frac{1}{|S_2|}\sum_{i \in S_2} \mathbf{1}_{\bar{w}_t e^{-r_t} > \bar{w}_i e^{-r_i}}
\end{align}

The distribution of $\hat{p}_t$ approaches a uniform distribution over the range $[0,1]$ given that $x_t$ is nominal and drawn from the same distribution as the nominal observations, $X$. This follows from the lemma given by Zhao et al. \cite{localAD}:
\begin{lem}
\label{nested_p_value}
Given a function $G(x)$ has the nestedness property, that is, for any $t_1 > t_2$ we have ${x : G(x) > t1} \subset {x : G(x) > t2}$. Then $P_{x\sim f_n}\left(G(x) \geq G(\eta)\right)$ is uniformly distributed in [0, 1] if $\eta \sim f_n$.
\end{lem}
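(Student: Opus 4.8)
The plan is to recognize the statement as an instance of the probability integral transform applied to the scalar random variable $W := G(x)$ with $x \sim f_n$. First I would introduce the survival function $\bar{F}(t) := P_{x\sim f_n}\!\left(G(x) \ge t\right)$, so that the quantity whose distribution we must identify is exactly $Q(\eta) := P_{x\sim f_n}\!\left(G(x) \ge G(\eta)\right) = \bar{F}\!\left(G(\eta)\right)$. Since $\eta$ is itself drawn from $f_n$, the random variable $G(\eta)$ has the same law as $W$, so it suffices to prove that $\bar{F}(W)$ is uniformly distributed on $[0,1]$.

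Next I would use the nestedness hypothesis: because $\{x : G(x) > t_1\} \subseteq \{x : G(x) > t_2\}$ whenever $t_1 > t_2$, the function $\bar{F}$ is non-increasing in $t$. Define its generalized inverse $\bar{F}^{-1}(u) := \inf\{t : \bar{F}(t) \le u\}$. For a fixed $u \in [0,1]$ I would then show that the event $\{\bar{F}(W) \le u\}$ coincides, up to a $P$-null set, with $\{W \ge \bar{F}^{-1}(u)\}$, using monotonicity and the appropriate one-sided continuity of $\bar{F}$; evaluating the probability of the latter event gives $\bar{F}\!\left(\bar{F}^{-1}(u)\right) = u$. Chaining these identities yields $P\!\left(\bar{F}(W) \le u\right) = u$ for every $u \in [0,1]$, which is precisely the claim that $Q(\eta) \sim \mathrm{Uniform}[0,1]$.

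The step that needs care — and the one I expect to be the main obstacle — is the treatment of atoms of $W$. The clean identity $\bar{F}\!\left(\bar{F}^{-1}(u)\right) = u$ and the equivalence of the two events above require that $G(x)$ be a continuous random variable when $x \sim f_n$ (equivalently, $\bar{F}$ has no jumps); otherwise $\bar{F}(W)$ is merely stochastically dominated by the uniform law. I would therefore carry the standing assumption that $G(x)$ is continuously distributed under $f_n$ — which holds in our setting, since $f_n$ is a density and the residual-based score is non-degenerate — or, if ties must be permitted, break them by an independent uniform randomization so that the transformed statistic is exactly uniform. Modulo this regularity condition, the remainder is the standard probability-integral-transform computation and introduces no further ideas.
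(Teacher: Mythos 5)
The paper itself offers no proof of this lemma: it is imported verbatim from Zhao et al.\ \cite{localAD} and used as a black box, so there is no argument in the text to compare yours against. Your proof is the standard probability-integral-transform computation --- set $\bar F(t)=P_{x\sim f_n}(G(x)\ge t)$, observe $G(\eta)\overset{d}{=}G(x)$, and show $\bar F(W)\sim\mathrm{Uniform}[0,1]$ via the generalized inverse --- and it is correct under the continuity assumption you impose. Two remarks. First, your caveat about atoms is not pedantry but a genuine correction: as literally stated the lemma is false without it (take $G\equiv c$; the nestedness property holds vacuously, yet $P(G(x)\ge G(\eta))=1$ almost surely, not uniform), so any honest proof must either assume $G(x)$ is continuously distributed under $f_n$ or randomize ties, exactly as you say. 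This matters for the paper's downstream use, since the empirical statistic $\hat p_t$ in \eqref{analysis_p_value} is built from finitely many samples and is therefore atomic; the correct conclusion there is only asymptotic uniformity, which is indeed how the paper's Theorem~4 phrases it. Second, the ``nestedness'' hypothesis is automatically satisfied by every real-valued function $G$ (superlevel sets are always nested), so it does no work in your argument and should not be expected to; in the source it is a property of the score regions in a more elaborate setting, and its appearance here is essentially vestigial. Your proof is self-contained and sound; the only thing to watch is that the regularity condition you add is an assumption the paper never states explicitly.
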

From this lemma, we can directly show that the distribution of $\hat{p}_t$ converges to a uniform distribution.

\begin{thm}
For a nominal test point, $x_t$, drawn from the same distribution as the labeled observations, $X$. $\hat{p}_t$ converges to a uniformly distributed random variable in the range $[0,1]$.
\end{thm}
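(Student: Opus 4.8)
The plan is to recognize $\hat p_t$ as an empirical version of the probability appearing in Lemma~\ref{nested_p_value} and then invoke that lemma. First I would fix notation: conditioned on the split of the training set, let $G(x) = \bar w(x)\, e^{-r(x)}$ be the score assigned to an arbitrary point $x$, where $r(x)$ is the manifold residual of \eqref{manifold_residual} computed against the KLRR learned on $S_1$ and $\bar w(x)$ is the corresponding average cosine similarity of \eqref{cos_sim}. The structural point that makes the argument work is that $G$ depends on the data only through $S_1$, so I would condition on $S_1$ throughout; this freezes $G$ and makes the test point $x_t$ together with the holdout points $\{x_i : i \in S_2\}$ an i.i.d.\ sample from the nominal density $f_n$ that is independent of $G$.

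Second, with $G$ fixed, rewrite \eqref{analysis_p_value} as
\begin{align}
\hat p_t = \frac{1}{|S_2|}\sum_{i\in S_2}\mathbf{1}_{G(x_t) > G(x_i)},
\end{align}
which is exactly the empirical cumulative distribution function of the i.i.d.\ scores $\{G(x_i)\}_{i\in S_2}$ evaluated at the independent draw $G(x_t)$. By the strong law of large numbers (equivalently Glivenko-Cantelli, to get convergence uniformly in the threshold) this converges almost surely, as $|S_2|\to\infty$, to $P_{x\sim f_n}\!\big(G(x) < G(x_t)\big)$; and because the nominal scores carry no point masses (the residual and the cosine similarity are continuous functions of continuously distributed observations), the strict and non-strict inequalities agree, so the limit is $P_{x\sim f_n}\!\big(G(x) \le G(x_t)\big)$.

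Third, I would check the hypothesis of Lemma~\ref{nested_p_value}: the superlevel sets satisfy $\{x : G(x) > t_1\}\subseteq\{x : G(x) > t_2\}$ whenever $t_1 > t_2$, which holds for any real-valued $G$, so the nestedness property is automatic. Applying the lemma with $\eta = x_t \sim f_n$ shows that the limiting random variable $P_{x\sim f_n}\!\big(G(x)\ge G(x_t)\big)$ is uniformly distributed on $[0,1]$. Since this conclusion holds conditionally on an arbitrary realization of $S_1$, it holds unconditionally, and therefore $\hat p_t$ converges in distribution to a uniform random variable on $[0,1]$.

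I expect the main obstacle to be bookkeeping around the test-sample representation rather than anything deep: one must verify that $r_t$ and $\bar w_t$ are genuine measurable functions of $x_t$ given $S_1$, so that $G(x_t)$ is a well-defined random variable comparable to the $G(x_i)$, and one must make explicit (or derive from a regularity assumption on $f_n$ and on $\phi$) the no-atoms property of the score distribution. That continuity is precisely what upgrades the conclusion from ``$\hat p_t \to F_G(G(x_t))$ with $F_G$ sub-uniform'' to ``$\hat p_t$ is asymptotically exactly uniform'' via the probability-integral-transform content of Lemma~\ref{nested_p_value}; it is the one place where an extra hypothesis is needed.
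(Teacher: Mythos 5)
Your proposal is correct and follows essentially the same route as the paper, which simply invokes Lemma~\ref{nested_p_value} on the grounds that $G(x)=\bar w(x)e^{-r(x)}$ has the nestedness property. You are in fact more careful than the paper's one-sentence proof: you make explicit the conditioning on $S_1$ that freezes $G$, the law-of-large-numbers step that takes the empirical average over $S_2$ to the population probability in the lemma, and the no-atoms condition on the score distribution needed to pass from sub-uniform to exactly uniform --- none of which the paper addresses.
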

\begin{proof}
This follows directly from Lemma \ref{nested_p_value}, as the function $G(x_i)=\bar{w}_i e^{-r_i}$ has the nested property.
\end{proof}
Therefore, the distribution of $p_t$ converges to a uniform as $n \rightarrow \infty$, and therefore the probability of false alarm converges to $\alpha$.

\section{Experimental Results}
\label{s.exp}
\subsection{Clustering}

\begin{figure}[ht]
\centering
\begin{center}
\centerline{\includegraphics[width=2.5in]{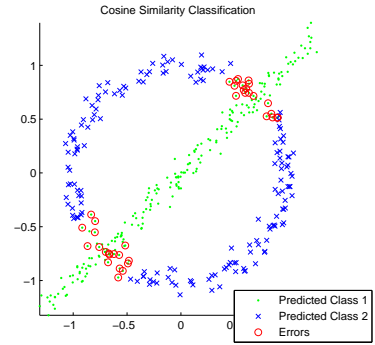}}
\vspace{-0.15in}
\caption{2-D simulated data set. Two classes are constructed, a line and circle, with 200 observations in each class and random gaussian noise added. Sample clustering results on simulated 2-D line-circle data set are shown using k-means clustering on the structural similarity defined in Equation \eqref{cos_sim}. }
\label{sim_clust_perf}\end{center}
\end{figure}

\begin{table}[h]
\begin{center}
\begin{tabular}{|c|c|c|c|c|c|}
\hline
\textbf{Data Set}& \textbf{Classes} & \textbf{Observation} & \textbf{Kernel} & \textbf{Similarity (W)}\\
\hline
Simulated & 2 & $49.4 \pm 0.1 \%$& $35.8 \pm 4.9 \%$ & $9.4 \pm 0.1 \%$ \\
\hline
Ionosphere & 2 & $28.8 \pm 0.1 \%$& $28.9 \pm 0.1 \%$ & $22.7 \pm 0 \%$ \\
\hline
Iris & 3 & $17.3 \pm 9.8 \%$ & $15.3 \pm 8.5 \%$ & $7.6 \pm 6.4 \%$\\
\hline
JAFFE & 10 & $29.5 \pm 4.4 \%$ & $25.5 \pm 5.0 \%$ & $ 12.7 \pm 5.3 \%$ \\
\hline
\end{tabular}
\label{Clustering_performance}
\caption{Average clustering error rates and standard deviation over 100 random initializations. Performance was compared for different representations: the original observations space (Observation), the expanded basis space (Kernel), and the cosine similarity space defined in \eqref{cos_sim} (Similarity (W)). For the JAFFE \cite{JAFFE_performance} and Iris databases \cite{iris_performance}, these performance rates are comparable to the best achieved results in literature and are achieved using an extremely simple algorithm (k-means clustering) on structured data.}
\end{center}
\vspace{-0.15in}
\end{table}


To evaluate performance, k-means clustering \cite{kmeans} was performed on representations of the data, with the results shown in Table \ref{Clustering_performance}. The k-means clustering algorithm was chosen as means to compare data representations due to its wide-spread use and lack of tuning parameters to be optimized. Initialization was performed by assigning observations to random clusters, with the error rates and standard deviations found for 100 random initializations. K-means clustering was tested on the data in the original feature space and in the expanded basis space, $\phi(X)$. For the simulated data, the expanded basis was generated from a 3rd order inhomogeneous polynomial kernel multiplied with a Gaussian RBF kernel. Note that this kernel does not perfectly transform the data to linear subspaces and therefore exact linear subspace recovery methods cannot be applied to this transform. For the Ionosphere, Iris, and JAFFE data sets, the expanded basis space was generated by Gaussian RBF kernels.

\begin{table}[h]
\begin{center}
\begin{tabular}{|c|c|c|c|c|c|}
\hline
\textbf{Data Set}& \textbf{Kernel} & \textbf{Similarity (W)} & \textbf{Struct. Kernel}\\
\hline
Simulated & $46.0 \pm 0 \%$ & $17.5 \pm 0 \%$ & $17.7 \pm 0\%$\\
\hline
Ionosphere &$35.9 \pm 0 \%$ & $22.5 \pm 0 \%$ & $22.8 \pm 0 \%$\\
\hline
Iris & $15.9 \pm 3.1 \%$ & $5.2 \pm 7.2 \%$ & $4.5 \pm 5.9 \%$\\
\hline
JAFFE &$17.14 \pm 5.2 \%$ & $ 13.5 \pm 5.8 \%$ & $13.7 \pm 5.3 \%$\\
\hline
\end{tabular}
\label{spectral_clustering_performance}
\caption{Average spectral clustering error rates and standard deviation over 100 random initializations. Performance was compared for different measures of similarity: the expanded basis space (Kernel), the cosine similarity space defined in \eqref{cos_sim} (Similarity (W)), and the structured kernel space defined in \eqref{scaled_cos_sim} (Structured Kernel). For the JAFFE \cite{JAFFE_performance} and Iris databases \cite{iris_performance}, these performance rates are comparable to the best achieved results in literature and are achieved using an extremely simple algorithm (k-means clustering) on structured data.}
\end{center}
\vspace{-0.15in}
\end{table}

Spectral clustering performance on an expanded basis space is compared to similarity measures incorporating structure in Table \ref{spectral_clustering_performance}. As with k-means clustering, inclusion of structure improved clustering performance in all example cases.

\subsection{Anomaly Detection}

We compare performance of the P-value estimation technique with the K-nearest neighbors graph (K-NN) method presented by Zhao et al. \cite{Manqi_AD} and a One-Class SVM \cite{oneclass_svm}. We evaluate performance on simulated data sets, the Ionosphere dataset \cite{UCI_database}, the USPS Digits data set \cite{elements_stat_learn}, and the JAFFE data set \cite{JAFFE_database}.

\begin{figure}[!t]
\begin{center}
\centerline{\includegraphics[width=2.5in]{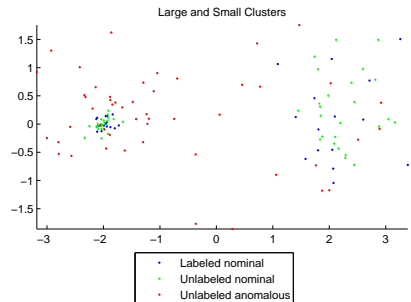}}
\vspace{-0.2in}
\caption{Example of simulated clusters data. Training was performed on 20 labeled nominal points (blue circles), and testing was performed on 50 unlabeled nominal points (green dots) and 50 unlabeled anomalous points (red crosses).}
\label{LS_data}\end{center}
\end{figure}

\begin{figure}[!t]
\begin{center}
\centerline{\includegraphics[width=2.5in]{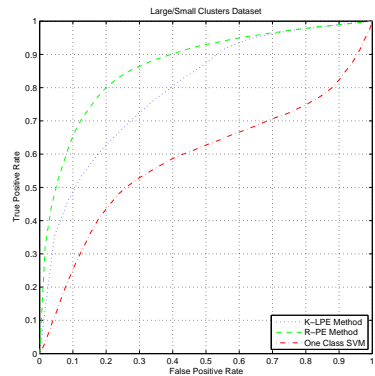}}
\vspace{-0.2in}
\caption{ROC curves averaged over 100 randomly generated data sets. Performance of p-value estimation using the KLRR residual is compared to p-value estimation using a Euclidean neighborhood (2nd nearest neighbor) and One-Class SVM with $\mu=0.5$.}
\label{LS_ROC}\end{center}
\end{figure}

The simulated clusters data set \ref{LS_data} consists of nominal data composed of two Gaussian distributions with different variances, and anomalous data drawn from a uniform distribution. 20 random nominal points were used to train the classifier, and performance was measured on a test set composed of 50 unobserved nominal points and 50 anomalous points, as shown in Fig. \ref{LS_data}. A Gaussian radial basis function kernel was used to approximate the manifold, and performance was averaged over 100 randomly generated data sets, with average performance shown in Fig. \ref{LS_ROC}.

\begin{figure}[!t]
\begin{center}
\centerline{\includegraphics[width=2.5in]{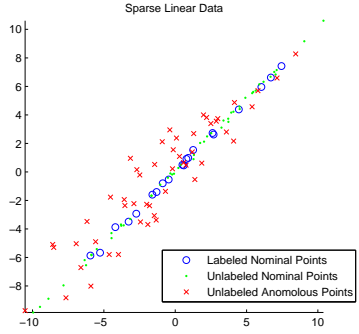}}
\vspace{-0.2in}
\caption{Example of simulated linear data. Training on 20 labeled nominal points (blue circles), testing on 50 unlabeled nominal points (green dots) and 50 unlabeled anomalous points (red crosses).}
\label{linear_data}\end{center}
\end{figure}

\begin{figure}[!t]
\begin{center}
\centerline{\includegraphics[width=2.5in]{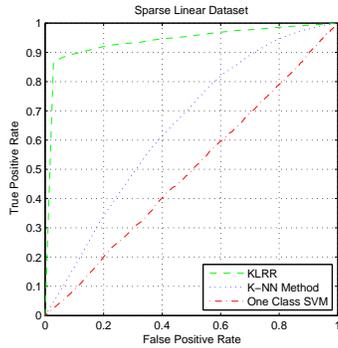}}
\vspace{-0.2in}
\caption{ROC curves averaged over 100 randomly generated data sets. Performance of p-value estimation using the KLRR residual is compared to p-value estimation using a Euclidean neighborhood (2nd nearest neighbor) and One-Class SVM with $\mu=0.5$.}
\label{linear_ROC}\end{center}
\end{figure}

The simulated linear data set was constructed of points generated form a linear subspace, with nominal points having small random perturbations and anomalous points having large perturbations. 20 random nominal points were used to train the classifier, and performance was measured on a test set composed of 50 unobserved nominal points and 50 anomalous points, as shown in Fig. \ref{linear_data}. 100 random data sets were generated, with an average performance shown in Fig. \ref{linear_ROC}. A linear low-rank representation of the labeled points was used to approximate the manifold.

\begin{figure}[!t]
\begin{center}
\centerline{\includegraphics[width=2.5in]{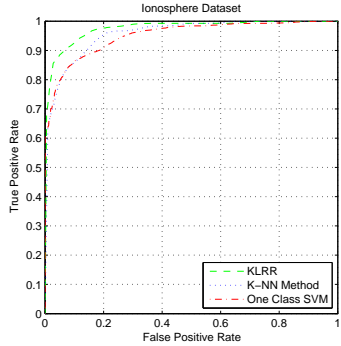}}
\vspace{-0.1in}
\caption{ROC curve using the proposed algorithm on the Ionosphere dataset. The ROC curve was generated by averaging results over 100 random sets. Performance using the KLRR residual and Euclidean distance (3rd nearest neighbor) for p-value estimation are shown, as well as the performance of a One-Class SVM with $\mu=0.5$.}
\label{ionosphere_ROC}\end{center}
\end{figure}

For the Ionosphere data set \cite{UCI_database}, 175 observations were labeled as nominal observations (drawn from the set which show evidence of structure in the ionosphere) and 30 observations were unlabeled for use as test data (drawn from both the "good" and "bad" observations). A gaussian radial basis function kernel was used, and performance was compared to anomaly detection using a K-nearest neighbor graph and a One-Class SVM, as shown in Figure \ref{ionosphere_ROC}.

For the JAFFE data set, 50 labeled nominal images were chosen from 3 random individuals (defined as nominal individuals) to construct the classifier. The test set was composed of 15 unobserved images randomly drawn from the nominal individuals and 100 anomalous images drawn from the other individuals. The performance using the KLRR residual was compared to the use of a K-nearest neighbor graph for p-value estimation \cite{Manqi_AD} and a One-Class SVM \cite{oneclass_svm}.
\begin{figure}[!t]
\begin{center}
\centerline{\includegraphics[width=2.5in]{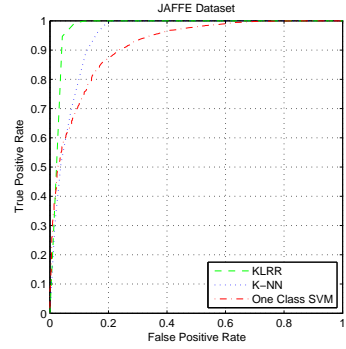}}
\vspace{-0.1in}
\caption{ROC curve using the proposed algorithm on the JAFFE dataset. The ROC curve was generated by averaging results over 100 random sets. Performance using the KLRR residual and Euclidean distance (3rd nearest neighbor) for p-value estimation are shown, as well as the performance of a One-Class SVM with $\mu=0.5$.}
\label{jaffe_ROC}\end{center}
\end{figure}
For the USPS Digits data set, 200 nominal images (the digit 8) were labeled, with 167 unlabeled images randomly drawn from the unobserved nominal images and 33 anomalous images drawn from the other digits. A Gaussian RBF was used to find the low-rank representation for both the USPS and JAFFE data sets, and the same kernel functions were used in the One-Class SVM.
\begin{figure}[!t]
\centering
\begin{center}
\centerline{\includegraphics[width=2.5in]{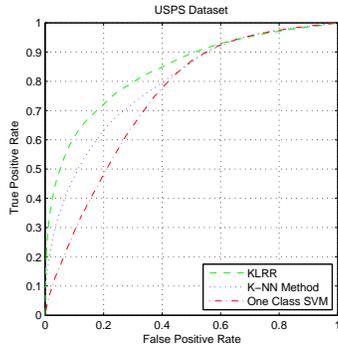}}
\vspace{-0.1in}
\caption{ROC curve on the USPS digits data set generated by averaging results over 100 random sets of labeled and unlabeled points. Performance using the KLRR residual and Euclidean distance (9th nearest neighbor) for p-value estimation are shown, as is the performance of a One-Class SVM with $\mu=0.5$.}
\label{usps_ROC}\end{center}
\end{figure}
Performance was averaged over 100 randomly assigned data sets for all experiments, with performance shown in Fig. \ref{jaffe_ROC} and Fig. \ref{usps_ROC} for the JAFFE and USPS data sets, respectively. Use of the KLRR residual energy improved classification performance for simulated and real-world data sets. The ROC curves for the experiments lie above the ROC curves for either the K-nearest neighbor method or the One-Class SVM, indicating that the underlying nominal distribution likely lies on a low-dimensional manifold, and this low-dimensional structure is well approximated by the $Z$.

\bibliographystyle{IEEEtran}
%

\bibliography{IEEEabrv,min_NN_biblio}

\end{document}